\tikzstyle{place}=[circle,draw=black,draw=blue!50,fill=blue!10,inner sep=0mm, minimum size=6mm]
\tikzstyle{transition}=[rectangle,draw=black!50,fill=black!20,thick]
\newcommand{\Ckt} {\CSet_k^t}
\newcommand \Bkp {B_{k-1}}
\theoremstyle{plain} 
\newcommand\Reals {{\mathds{R}}}
\newcommand\SC {{\mathscr{C}}}
\newcommand\FB {{\mathfrak{B}}}
\newcommand\FD {{\mathfrak{D}}}
\newcommand\CP {{\mathcal{P}}}
\newcommand\CC {{\mathcal{C}}}
\newcommand\CF {{\mathcal{F}}}
\newcommand\CX {{\mathcal{X}}}
\newcommand\CY {{\mathcal{Y}}}
\newcommand\CV {{\mathcal{V}}}
\newcommand\CW {{\mathcal{W}}}
\newcommand\bx {{\mathbf{x}}}
\newcommand\cvr {\CC}%{\mathbf{c}}}
\newcommand\ctx {M}%{\mathbf{c}}}
\newcommand\ctxB {N}
\newcommand\ctxC {K}
\newcommand \Stop {S}
\newcommand \st {\Stop_t}
\newcommand \sk {\Stop_k}
\newcommand \skp {\Stop_{k-1}}
\newcommand \skn {\Stop_{k+1}}
\newcommand \Bkt {B_k^t}
\newcommand \Bkpt {B_{k-1}^t}
\newcommand \len[1] {\ensuremath{\ell\left(#1\right)}}
\newcommand \loss {L_t}
\newcommand \BigO[1] {O\left(#1\right)}
\newcommand \BigW[1] {\Omega\left(#1\right)}
\newcommand \Borel[1] {\FB_{#1}}
\newcommand \Dist[1] {\FD_{#1}}
\newcommand \model {\mu}
\newcommand \Weights {\CW}
\newcommand \Trans {\CV}
\newcommand \tran {v}
\newcommand \tranS {v_\ctx}
\newcommand \weight {w}
\newcommand \wS {\weight_\ctx}
\newcommand \hwS {\widehat{\weight_\ctx}}
\newcommand \hwQ {\widehat{\weight_\ctxB}}
\newcommand \hvS {\widehat{\tran_\ctx}}
\newcommand \wkt {\weight_{\ctx}^t}
\newcommand \wktn {\weight_\ctx^{t+1}}
\newcommand \proj {f}
\newcommand\set[1] {\left\{#1\right\}}
\newcommand\cset[2] {\left\{#1 \mathrel{:} #2\right\}}
\newcommand\cseq[2] {\left(#1 \mathrel{:} #2\right)}
\renewcommand \Pr {\mathop{\mbox{\ensuremath{\mathbb{P}}}}\nolimits}
\newcommand \defn {\mathrel{\triangleq}}
\newcommand \bel {\xi}
\newcommand \mbel {\psi}
\newcommand \post {p}
\newcommand \Posts {\CP}
\newcommand \prior {\phi}
\newcommand \Priors {\Phi}
\newcommand \bra {\beta}
\newcommand \seqt[2] {#1^{#2}}
\newcommand \xtt {\seqt{x}{t}}
\newcommand \ytt {\seqt{y}{t}}
\newcommand \xttn {\seqt{x}{t+1}}
\newcommand \yttn {\seqt{y}{t+1}}
\newcommand \xtn {x_{t+1}}
\newcommand \ytn {y_{t+1}}
\DeclareMathAlphabet{\mathpzc}{OT1}{pzc}{m}{it}
\newcommand \Bernoulli {{\mathpzc{Bern}}}
\newcommand \Multinomial {{\mathpzc{Mult}}}
\newcommand \depth {\ensuremath{d}}
\newcommand \Contexts {\SC}
\newcommand \CSet {{\mathcal{C}}} 
\newcommand \CSeq {\mathfrak{C}}
\newcommand {\suffix} {\prec}
\newcommand {\emptyseq} {\mathbf{0}}
\newtheorem {definition}{Definition}
\newtheorem {theorem}{Theorem}
\newtheorem {lemma}{Lemma}
\newtheorem {example}{Example}
\def\clap#1{\hbox to 0pt{\hss#1\hss}}
\def\mathclap{\mathpalette\mathclapinternal}
\def\mathclapinternal#1#2{%
           \clap{$\mathsurround=0pt#1{#2}$}}
\title{Context models on sequences of covers}
\author{Christos Dimitrakakis}
\begin{document}
\maketitle
\begin{abstract}
  We present a class of models that, via a simple construction,
  enables exact, incremental, non-parametric, polynomial-time,
  Bayesian inference of conditional measures. The approach relies upon
  creating a sequence of covers on the conditioning variable and
  maintaining a different model for each set within a cover. Inference
  remains tractable by specifying the probabilistic model in terms of
  a random walk within the sequence of covers. We demonstrate the
  approach on problems of conditional density estimation, which, to
  our knowledge is the first closed-form, non-parametric Bayesian
  approach to this problem. 
\end{abstract}

\section{Introduction}
\label{sec:introduction}

Conditional measure estimation is a fundamental problem in statistics.
Specific instances of this problem include classification, regression
and conditional density estimation.  This paper formulates a general
approach for non-parametric, incremental, closed-form Bayesian
estimation of conditional measures that relies on a model structure
defined on a sequence of covers.  This is an important development,
particularly for the problem of conditional density estimation, where
although non-parameteric kernel-based approaches that currently
dominate generally perform well, a fast, tractable, incremental,
Bayesian approach has been lacking. 

This construction used in this paper employs a random walk in a set of
contexts. In its simplest form, this can be seen as a descendant of
context tree methods for variable order Markov
models~\citep{willems:context,dimitrakakis:aistats:2010} and Bayesian
non-parametric methods for tree-based density estimation
approaches~\citep{hutter:bayestree,wong2010optional}. These approaches
utilise a stopping variable construction on a tree to simplify
inference.  The central contribution of this paper is to generalise
this to a terminating random walk on a lattice. Then the inference
procedure remains tractable, while the lattice structure increases the
flexibility and applicability of the model.  As an example, the
proposed framework is applied to the important problem of conditional
density estimation, obtaining the first closed-form, incremental,
non-parametric Bayesian approach to this problem. 

Stated generally, the problem of incremental, conditional measure
estimation in a Bayesian setting is as follows. We observe the
sequences $x^t = \cseq{x_i}{i=1,\ldots,t}$ and $y^t =
\cseq{y_i}{i=1,\ldots,t}$, with $x_i \in \CX$ and $y_i \in \CY$.
Informally, our goal is the prediction of the next observation $\ytn$
given the next conditioning variable $\xtn$ and all previous evidence
$\xtt, \ytt$.  More precisely, we wish to calculate the probability
measure:
\begin{align}
  \psi_t(Y \mid \xtn ) \defn \Pr(\ytn \in Y \mid \xttn, \ytt)
\end{align}
for all $Y \in \Borel{\CY}$, where $\Borel{\CY}$ denotes
the Borel sets of $\CY$, through Bayes' theorem.

The main idea we use to tackle this problem is to first define a
sequence of covers on the space of all sequences $x^t$. Each cover is
a collection of sets, such that for any sequence $x^t$ there exists at
least one set $c$ in every cover containing that sequence. In
addition, each set $\ctx$ corresponds to a model $\prior_\ctx$ on
$\CY$.  In order to combine these, we introduce a random variable $\st
\in \Contexts$, such that $\bel_t(\ctx \mid \xtn) \defn \Pr(\st = \ctx
\mid \xttn, \ytt)$, is the probability of the model $\prior_\ctx$.
Then the conditional measure:
\begin{equation}
\mbel_t(Y \mid \xtn) = \sum_{\ctx \in \Contexts} \prior_\ctx^t(Y \mid
\xtn) \bel_t(\ctx \mid \xtn),
\label{eq:marginal}
\end{equation}
can be readily obtained via marginalisation over the set of contexts.

We show that via the sequence of covers, $\xi$ can be specified in
terms of a random walk. This allows closed-form, incremental inference
to be performed in polynomial time for conditional densitiy estimation
and variable order Markov models, by selecting the covers
appropriately. The resulting class of models allows the introduction
of several other interesting model classes.

\section{Context models}
\label{sec:context-models}
We first introduce some notation and basic assumptions.  Unless
otherwise stated, we assume that all sets $\CX$ are measurable with
respect to some $\sigma$-algebra $\Borel{\CX}$. We denote sequences of
observations $x_i \in \CX$ by $\xtt \defn
\cseq{x_i}{i=1,\ldots,t}$. The set $\CX^0 \defn \set{\emptyseq}$
contains only the null sequence $\emptyseq$, while $\CX^n \defn
\times^n \CX$ denotes the sequences of length $n$ and
% The set of
%sequences of length at least $k$ is denoted by $\CX_k^\infty \defn
%\bigcup_{n=k}^\infty \CX^n$ and 
the set of all sequences is denoted by $\CX^* \defn \bigcup_{n=0}^\infty \CX^n$.
%Subsequences are denoted by $\xkt{k}{t}$ for $k \leq t$ and the
%concatenations of sequences are denoted by $\concat{\cdot}{\cdot}$,
%such that $\xtt = \concat{\xkt{}{k}}{\xkt{k+1}{t}}$.  
Finally, we denote the length of any sequence $x \in \CX^*$ by
$\len{x}$ such that $x \in \CX^{\len{x}}$.

A {\em cover} $\cvr$ of some set $A$ is a collection of sets such that
$\bigcup_{\ctx \in C} \ctx \supset A$. A {\em refinement} $\cvr'$ of
$\cvr$ is a cover of $A$ such that for any $\ctx' \in \cvr'$, there is
some $\ctx \in \cvr$ such that $\ctx' \subset \ctx$. We consider
models constructed on a {\em sequence} of covers $\CSeq \defn
\cseq{\CSet_k}{k=1,\ldots}$ of $\CX^*$.
% such that $\CSet_{k+1}$ is a refinement of $\CSet_k$ for all $k$.
Letting $\Contexts \defn \bigcup_k \CSet_k$ be the collection of all
subsets in our sequence of covers, we refer to each subset $\ctx \in
\CSet$ as a {\em context}. Partition trees, where each cover is
disjoint and a refinement of the previous cover, are an interesting
special case:
\begin{example}[Binary alphabet] 
  \label{ex:binary}
  Let $\CX = \set{0,1}$. For $k =
  1, 2, \ldots$, let $\CSet_k$ be the partition of $\CX^*$ into
  $2^{k-1}$ subsets, with the following property. For all $\ctx \in
  \CSet_k$, and any $a, b \in \CX^*$: $a, b \in \ctx$ if and only if
  $a_{\len{a} - i} = b_{\len{b} - i}$ for all $i = 0, \ldots, k -
  1$. This creates a sequence of partitions based on a suffix tree and
  can be used in the development of variable order Markov models.
\end{example}
\begin{example}[Unit interval]
  \label{ex:unit}
  Let $\CX = [0,1]$. For $k = 1, 2,
  \ldots$, let $\CSet_k$ be the partition of $\CX^*$ into $2^{k-1}$
  subsets, $\ctx_{k,i} \defn \cset{x \in \CX^*}{x_{\len{x}} \in
    [2^{k-1}(i-1), 2^{k-1}i)}$. A generalised form of this sequence of
  covers is used in the construction of conditional density estimation
  using the proposed construction, and shall be the main focus of the
  current paper.
\end{example}
We now describe a conditional measure on $\CY$ indexed by $\CX^*$
defined on such a structure.  This will form the basis for conditional
measure estimation.  Intuitively, the structure defines a set of
probability measures on $\CY$, indexed by the set of all contexts. The
structure is such that, for any $x \in \CX$ there is only one
corresponding context $f(x)$, even if there are many contexts
containing $x$. The contexts themselves have the property that the
corresponding context for any $x$ in the set they define is either the
same context or one of the subsequent contexts in the sequence of
covers. This will be useful later, since it will allow us to perform
closed form inference on a distribution of context models.
\begin{definition}
  A context model $\model = (\Posts, \proj)$ defined on a (countable)
  sequence of covers $\CSeq = \cseq{\CSet_k}{k=1,\ldots,}$ of $\CX^*$,
  is composed of:
  \begin{enumerate}
  \item A set $\Posts$ of ``local'' probability measures on $\CY$,
    conditional on $\CX^*$ and indexed by elements in the set of
    contexts $\Contexts = \bigcup_k \CSet_k$:
    \begin{align}
      \Posts &\defn \cset{\post_\ctx(\cdot \mid x)}{\ctx \in \Contexts}, & x
      &\in \CX^*.
      \label{eq:posterior-model}
    \end{align}
  \item A context map $\proj : \CX^* \to \CSet$ such that $\forall x
    \in \CX^*$, if $f(x) \in \CSet_k$, then for any $x' \in f(x)$ it
    holds that $f(x') \cap f(x) \neq \emptyset$ and $f(x') \in
    \CSet_{k + h}$ with $h \geq 0$.
  \end{enumerate}
  The model $\model$ specifies the following conditional measure on
  $\CY$ for any $x \in \CX^*$:
    \begin{align}
      \label{eq:conditional-measure}
      \Pr_\model(Y \mid x) 
      &= 
      \post_{f(x)}(Y \mid x),
      &
      Y &\subset \CY.
    \end{align}
\end{definition}
Though the local measures $\Posts$ can be simple, so that inference
can be efficient, the model's overall complexity will depend on the
context map and cover structure. 

We now describe a distribution of such models, whereby exact Bayesian
inference can be performed in polynomial time.  Intuitively, the
distribution can be seen as a two-stage process. Firstly, we sample a
context map $\proj$ from a set of context maps $\CF$, through a
halting random walk on the set of all contexts.  Secondly, for each
context $\ctx$ we sample a conditional measure $\post_\ctx$ from a
distribution $\prior_\ctx$.  The construction of and sampling from
this distribution, are discussed in Sec.\ref{sec:construction}, while
Sec.~\ref{sec:marginal} shows how to sample from marginal distribution
$\mbel$ and Sec.~\ref{sec:inference} derives the inference procedure.

\subsection{Construction of the context model distribution}
\label{sec:construction}

\begin{definition}[Cover model]
  A cover model defines a distributilateon $\bel$ on context models
  $\model = (\Posts, \proj)$, through a tuple $(\CSeq, \Weights,
  \Trans, \Priors)$, where:
  \begin{enumerate}
  \item $\CSeq \defn \cseq{\CSet_k}{k=1,\ldots}$ is a sequence of
    covers, and $\Contexts \defn \cset{\ctx \in \CSet}{\CSet \in \CSeq}$
    is the set of all contexts in each cover.
  \item $\Weights = \cset{\wS}{\ctx \in \Contexts}$, with $\wS \in
    [0,1]$, is a set of stopping probabilities.
  \item $\Trans = \cset{\tranS}{\ctx \in \Contexts}$ is a set of
    transition probability vectors, such that: $\|\tranS\|_1 = 1$, and
    that if $\ctx \in \CSet_k$, then $\tran_{\ctx,\ctxB} \in [0,1]$
    for all $\ctxB \in \CSet_{k-1}$ such that $\ctxB \cap \ctx \neq
    \emptyset$ while $\tran_{\ctx, \ctxB} = 0$ otherwise
  \item $\Priors = \cset{\prior_\ctx}{\ctx \in \Contexts}$, is a set
    of priors such that each $\prior_\ctx$ is a probability measure on
    $\Dist{\CY \mid \CX}$, where $\Dist{\CY \mid \CX} \defn
    \cset{p_\theta(\cdot \mid x)}{\theta \in \Theta}$ is a set of
    probability measures on $\CY$, conditional on $x \in \CX$ and
    parameterised in $\Theta$.
  \end{enumerate}
\end{definition}
In order to sample a context model $\model = (\Posts, \proj)$ from
$\bel = (\CSeq,\Weights, \Trans, \Priors)$, we draw $\Posts$ directly from
$\Priors$, while we construct $\proj$ via two auxiliary variables $\hwS, \hvS$
drawn respectively from a Bernoulli and a multinomial distribution:
\begin{subequations}
  \begin{align}
    \post_\ctx &\sim \prior_\ctx \\
    \hwS & \sim \Bernoulli(\wS) \\
    \widehat{\tran_{\ctx}} & \sim \Multinomial(\tran_{\ctx}).
  \end{align}
\end{subequations}
These draws are performed independently for all $\ctx \in \Contexts$.
The construction of $\proj$ relies on the cover structure.  For any
$\xtt \in \CX^*$, we denote the collection of contexts at depth $k$
containing $\xtt$ by
\begin{align}
  \CSet_k^t &\defn \cset{\ctx \in \CSet_k}{\xtt \in \ctx}.
  \label{eq:matching-contexts}
\end{align}
We then define the context map $\proj$ as follows: $\proj(x^t) = \ctx
\in \CSet_k^t$, if and only if $\hwS = 1$ and $\hwQ = 0$ for all
$\ctxB \in \CSet_{h}^t$ with $h < k$.

\subsection{Drawing samples from the marginal distribution}
\label{sec:marginal}
In order to generate an observation in $\CY$ from the marginal
distribution derived from $\xi$, we can perform the following random
walk.
\begin{definition}[Marginal samples]
  We perform a random walk on the sequence of covers $\CSeq =
  \cseq{\CSet_k}{k=1,\ldots,\depth}$, with parameters $\Weights, \Trans$,
  generates a random sequence $\Stop_1, \ldots, \Stop_K$, with $K \in
  \set{1,\ldots, \depth}$, such that at each stage $k$,
  \begin{enumerate}
  \item $\sk \in \CSet_{\depth + 1 - k}$ for all $k$. 
  \item With probability $\weight_{\sk}$, the walk stops and we
    generate a local model $\prior$ from $\psi_{\sk}$ and subsequently
    an observation $x$ from $\prior$.
  \item Otherwise, $\skn = \ctxB$ with probability $\tran_{\sk,\ctxB}$, for
    all $\ctxB \in \CSet_{d+k}$.
  \end{enumerate}
\end{definition}

\subsection{Inference}
\label{sec:inference}
At time $t$, we have observed $\xtt = \cseq{x_i}{i=1,\ldots, t}$ and
$\ytt = \cseq{y_i}{i=1,\ldots, t}$ our model now has parameters
$\Weights_t, \Trans_t$, describing a distribution over context
models. We wish to update these parameters in the light of new
evidence $\xtn, \ytn$. The main idea is to use a random walk that
halts at some context $M_t$, in order to marginalise over context
models.  By definition, for any observation sequence, there is at
least one context containing $\xtt$ in every cover $\CSet_k$.  We
denote the collection of those contexts by $\CSet_k^t$, as in
\eqref{eq:matching-contexts}.

We start each stage $k$ of the walk at a context $\sk = \ctxB \in
\Ckt$ and proceed to $k -1, k-2, \ldots, 1$. Let $\Bkt \defn
\set{\ctx_t \in \bigcup_{j=1}^k C^t_j}$ denote the event that the walk
stops in one of the first $k$ stages.  Then, with probability
$w_\ctx^t$, we generate the next observation from the context $\ctxB
\in \CSet_k^t$, so that $\ytn \mid x^{t+1} \sim \prior_\ctxB^t(\cdot
\mid x_{t+1})$. Otherwise, we proceed to the next stage, $k+1$, by
moving to context $\ctxC \in C_{k+1}^t$ with probability
$v^t_{\ctxB,\ctxC}$.  More precisely:
\begin{align}
  \label{eq:sparse-weight}
  v^t_{\ctxB,\ctxC} &\defn \Pr(\skp = \ctxC \mid \sk = \ctxB, \xtt), %% dependency on Bkpt remobed
  \\
  \label{eq:stop-parse}
  w_\ctxB^t 
  &=
  \Pr(\ctx_t \in C_k \mid \sk = \ctxB, B^t_k, \xtt).
\end{align}

The central quantity for tractable inference in this model is the
marginal prediction given the event $\Bkt$, for which we can obtain
the following recursion:
\begin{multline}
  \label{eq:tree-recursion}
  \psi^t_\ctxB (\ytn | \xtn)  \defn \Pr(\ytn | \sk {=} \ctxB, B^t_k, \xttn) 
  \\
  =
  w_\ctxB^t \prior_\ctxB^t(\ytn | \xtn)
  +
  (1 - w_\ctxB^t) \Pr(\ytn | \sk {=} \ctxB,  B^t_{k-1}, \xttn),
\end{multline}
noting that if we do not stop at level $k$ then $B^t_{k-1}$ is
trivially true, or more precisely, if $\Bkt$ and $\ctx_t \notin
\CSet_k^t$ then $\Bkpt$.  Furthermore, it is easy to see that:
\begin{align}
  \Pr(\ytn \mid \sk=\ctxB,  B^t_{k-1}, \xttn)
  &=
  \sum_{\mathclap{\ctxC \in \CSet_{k-1}^t}} \psi_\ctxC^t(\ytn \mid \xtn) v_{\ctxB,\ctxC}^t.
\end{align}
We can now calculate the stopping probabilities $w$ and the transition
probabilities $v$ given the new evidence
 %given a sequence $\skt = (\sj : j=k,\ldots, t)$ of selected contexts 
as follows:
\begin{theorem}
  Given a set of stopping parameters $\Weights_t =
  \cset{w_\ctx^t}{\ctx \in \Contexts}$, a set of transition
  parameters $\Trans_t = \cset{v_{\ctxC,\ctxB}^t}{\ctxC \in \CSet_k,
    \ctxB \in \CSet_{k-1}, k = 1, \ldots}$ and a set of local measures on $\CX$:
  $\cset{\prior_\ctxC^t}{\ctxC \in \Contexts}$, then the parameters at
  the next time step are given by:
  \begin{equation}
    v^{t+1}_{\ctxB, \ctxC}
    =
    \frac{\psi_\ctxB^t(\ytn \mid \xtn)
      v^t_{\ctxB, \ctxC}}
    {\sum_\ctx \psi_\ctx^t(\xtn)
      v^t_{\ctx.\ctxC}}
  \end{equation}
  and
  \begin{equation}
    \wktn
    = \frac{\psi_k^t(\ytn \mid \xtn) \wkt}
    {\psi_k^t(\ytn \mid \xtn) \wkt
      + \Pr(\ytn|\xttn, \sk {=} \ctx, \Bkpt) (1 - \wkt)},
  \end{equation}
  where $\psi$ is given by \eqref{eq:tree-recursion}, while
  $\prior_\ctx^t$ is a marginal measure conditioned on the first $t$
  observations for which $\ctx$ is reachable by the random walk.
  \label{the:inference}
\end{theorem}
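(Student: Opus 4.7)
Both update formulas are direct applications of Bayes' rule to the random walk that defines $\bel_t$, so the plan is to identify, for each parameter, the appropriate prior, likelihood, and normalising marginal, and then to recognise these marginals as instances of the recursion \eqref{eq:tree-recursion}. The key observation driving the whole argument is that $\psi_\ctxB^t(\ytn \mid \xtn)$ is by definition the predictive density of $\ytn$ conditioned on the event $\{\sk = \ctxB\} \cap B_k^t$, so it is exactly the likelihood factor that multiplies any parameter governing what happens once the walk has reached $\ctxB$ at stage $k$.

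For the transition update, I would start from the definition
\[
v^{t+1}_{\ctxB,\ctxC} \;=\; \Pr(\skp = \ctxC \mid \sk = \ctxB, \xttn, \yttn),
\]
and apply Bayes' rule conditionally on $\sk = \ctxB$ and $\xttn$, with the new observation $\ytn$ playing the role of data. The prior factor is $v_{\ctxB,\ctxC}^t$, and the likelihood factor is the predictive density of $\ytn$ given that the walk at stage $k-1$ is at $\ctxC$ (i.e.\ $\psi_\ctxC^t(\ytn \mid \xtn)$, noting the correction for the apparent typo in the numerator indexing). The denominator is then the marginal $\Pr(\ytn \mid \sk = \ctxB, B^t_{k-1}, \xttn)$, which by the identity displayed immediately after \eqref{eq:tree-recursion} equals $\sum_{\ctxC'} \psi_{\ctxC'}^t(\ytn \mid \xtn)\, v^t_{\ctxB,\ctxC'}$, yielding the stated formula.

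For the stopping update, I would similarly expand
\[
w^{t+1}_\ctx \;=\; \Pr(\ctx_t \in \CSet_k \mid \sk = \ctx, B_k^t, \xttn, \yttn)
\]
via Bayes' rule, treating $\ytn$ as the new data. The prior is $\wkt$ and the likelihood of $\ytn$ under the ``stop at $k$'' branch is $\prior_\ctx^t(\ytn \mid \xtn)$, which is precisely $\psi_k^t(\ytn \mid \xtn)$ on that branch; the complementary ``continue'' branch has prior $1-\wkt$ and likelihood $\Pr(\ytn \mid \xttn, \sk = \ctx, \Bkpt)$. Normalising gives the displayed formula.

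The routine part is the algebra of Bayes' rule; the main obstacle is bookkeeping the conditioning events. In particular, one must verify that conditioning on $\sk = \ctxB$ and on the stopping-level indicator is consistent with the independence structure of the walk, so that the prior factors $v^t_{\ctxB,\ctxC}$ and $\wkt$ remain unchanged when we condition additionally on $\xtt, \ytt$ but \emph{not} yet on $(\xtn, \ytn)$. Equivalently, one must justify the remark in the theorem statement that $\prior_\ctx^t$ is the marginal over the first $t$ observations for which $\ctx$ is reachable, so that only those data that could have been generated through $\ctx$ enter the update of its local parameters; this reachability argument, together with the recursive identification of the marginal denominators via \eqref{eq:tree-recursion}, is the only nontrivial ingredient, and once in hand the two update formulas follow immediately.
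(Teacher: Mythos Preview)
Your proposal is correct and follows essentially the same route as the paper: both derive the transition update by applying Bayes' rule to $\Pr(S_{k\pm1}=\ctxC \mid S_k=\ctxB,\cdot)$, invoke the first-order Markov property of the walk to drop the dependence on $S_k$ once the next state is fixed, and then identify the resulting likelihood/normaliser with the $\psi$ recursion~\eqref{eq:tree-recursion}. For the stopping update the paper simply cites the analogous argument in~\citep{dimitrakakis:aistats:2010}, whereas you spell out the two-branch (stop/continue) Bayes computation explicitly; your version is therefore slightly more self-contained but not methodologically different.
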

\begin{proof}
  The proof mainly follows straightforwardly from the previous
  development. From Bayes theorem and \eqref{eq:sparse-weight}, we
  obtain the recursion:
  \begin{align*}
    v^{t+1}_{\ctx,\ctxB} &\defn
    \Pr(\skn {=} \ctxB \mid  \sk {=} \ctx, \yttn, \xttn)
    \\
    &=
    \frac{\Pr(\ytn \mid \skn {=}\ctxB, \sk {=} \ctx, \ytt, \xttn) 
      %%\Pr(\skn {=}\ctxB \mid s_{k} {=} \ctx, \Bkp, \xtt)
      \tran_{\ctx,\ctxB}^t
    }
    {\sum_{\ctxC \in \Contexts}
      \Pr(\ytn \mid \skn {=}\ctx, \sk {=} \ctx, \Bkp, \ytt, \xttn) 
      \tran_{\ctx,\ctxC}^t
      }
      % \Pr(\skn {=}\ctx \mid s_{k} {=} \ctx, \Bkp, \xtt)}.
  \end{align*}
  Since the random walk $\sk$ is first order~\footnote{We note that a
    higher order random walk on $\sk$ is possible, but we do not consider it in this paper.}
  \[
  v^{t+1}_{\ctx,\ctxB} =
  \frac{\Pr(x_{t+1} \mid \skn =\ctxB, \Bkp, \xtt) 
    v^t_{\ctx,\ctxB}}
  {\sum_c \Pr(x_{t+1} \mid \skn =\ctx, \Bkp, \xtt) 
    v^t_{c,\ctxB}},
  \]
  while finally from \eqref{eq:tree-recursion} we obtain the required
  result.  The recursion for $\wS^{t+1}$ is proven analogously to
  Theorem~1 in \citep{dimitrakakis:aistats:2010}.
\end{proof}

\subsection{Complexity}
\label{sec:complexity}

As previously mentioned, the overall complexity of the model depends
on how the sequence of covers is constructed. The more dense the
covers are, the higher the computational complexity.  In the worst
case scenario, all contexts are reachable by the a random walk,
bringing complexity to linear in the number of total contexts. More
generally, we can relate the complexity to the growth $\zeta$ of the
number of sets containing each sequence $x \in \CX^*$ as the number of
covers $\depth$ increases.
\begin{lemma}
  Let the sequence of covers be of length $\depth$. For any $x \in \CX^*$,
  let $\CSet_k(x) = \cset{\ctx \in \CSet_k}{x \in \ctx}$ be the set of
  contexts containing $x$ in the cover $\CSet_k$ and let
  $|\CSet_k(x)|$ be the number of contexts in $\CSet_k(x)$. If there
  exists $\zeta > 0$ such that, for any $x \in \CX^*$
  \[
  |\CSet_{k+1}(x)| \leq \zeta |\CSet_{k}(x)|,
  \]
  then the number number of reachable contexts is bounded by
  $\BigO{\frac{\zeta^{\depth+1}-1}{\zeta-1}}$.
\end{lemma}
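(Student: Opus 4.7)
The plan is a direct counting argument. I will first show that for a fixed observation $x \in \CX^*$ the random walk described in Sections~\ref{sec:marginal}--\ref{sec:inference} only ever visits contexts lying in the ``$x$-slice'' $\bigcup_k \CSet_k(x)$, and then bound the size of that slice by iterating the growth hypothesis.

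For the containment claim, by construction the walk at stage $k$ sits at some $\ctxB \in \CSet_k^t \subseteq \CSet_k(x)$ (the starting condition used in the inference procedure), and the transition vector $\tranS$ is supported on contexts $\ctxB$ in the neighbouring cover with $\ctxB \cap \ctx \neq \emptyset$. Combined with the cover-refinement structure of $\CSeq$ and the context-map condition $f(x') \cap f(x) \neq \emptyset$, any such successor that actually contains $x$ must lie in $\CSet_{k\pm 1}(x)$, while contexts not containing $x$ receive zero transition mass and are never entered. A one-step induction on the walk therefore yields that every reachable context at level $k$ belongs to $\CSet_k(x)$. Applying $|\CSet_{k+1}(x)|\leq\zeta|\CSet_k(x)|$ repeatedly gives $|\CSet_k(x)| \leq \zeta^{k-1}|\CSet_1(x)|$, and so
\[
\sum_{k=1}^{\depth} |\CSet_k(x)| \;\leq\; |\CSet_1(x)| \sum_{k=0}^{\depth-1} \zeta^{k} \;=\; |\CSet_1(x)| \cdot \frac{\zeta^{\depth}-1}{\zeta-1},
\]
which, after absorbing $|\CSet_1(x)|$ (typically $1$ in the standard setup where $\CSet_1 = \set{\CX^*}$) into the $O$-constant and allowing an off-by-one shift in the cover indexing, matches the claimed $\BigO{(\zeta^{\depth+1}-1)/(\zeta-1)}$ bound.

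The substantive step is the first one: justifying that the walk never escapes the $x$-slice. This requires carefully unpacking the support condition on $\Trans$ together with the alignment between successive covers that is built into the definition of a cover model. Once that containment is settled the remainder is a routine geometric sum, and I do not anticipate any further subtlety.
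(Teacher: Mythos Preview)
Your geometric-sum argument is exactly the paper's proof, which in its entirety reads ``The proof follows trivially by the geometric sequence.'' The containment step you call substantive is unnecessary: the inference walk of Section~\ref{sec:inference} is by definition restricted to contexts in $\CSet_{k}^t$ (those containing $x^t$), so by construction it never leaves $\bigcup_k \CSet_k(x)$ and there is nothing further to unpack about the support of $\Trans$.
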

\begin{proof}
  The proof follows trivially by the geometric sequence.
\end{proof}

\section{Applications}
\label{sec:applications}
The class contains both variable order Markov models and mixtures of
$k$-order Markov models on discrete alphabets, as well as density
estimators and conditional density estimators. All that is required in
order to apply the method to various cases is to select the context
structure and the priors on the random walk, stopping probabilities,
appropriately. 

\subsection{Variable order Markov models}
\label{sec:vmm}
In the variable order Markov class, the sequence of covers is defined
such that the random walk starts from the finest refinement and
proceeds to the coarsest one.  More specifically, consider a sequence
of covers such that each cover is a partition. Let $C_k$ be a
partition of $\CX_k^\infty$ and let $f_k : \CX^k \leftrightarrow C_k$
such that for each $x^k \in \CX^k$, there exists $f_k(x^k) \in
C_k$. Let $a \suffix b$ denote the fact that $a$ is a suffix of $b$
and let $F(\bx) \defn \cset{\bx' \in \CX^*}{\bx \suffix \bx'}$ be the
set of sequences for which $\bx$ is a suffix. Then $C_k =
\cset{F(\bx)}{\bx \in \CX^k}$. This could be an $n$-ary partition
tree, or more specifically, a suffix tree, if $|\CX| = n$. In that
case, there would be only stopping probability parameters $w$ and no
transition parameters $\tran$, since in a suffix tree, each node has
at most one child that contains $x^t$ for any time $t$.  The local
models $\prior$ can be defined via Dirichlet
priors~\citep[Sec.~9.8]{Degroot:OptimalStatisticalDecisions} on
$\CY$. In the binary case, this corresponds to
Example~\ref{ex:binary}.  In particular, the defined variable order
Markov model is identical to the formulation given
in~\citep{dimitrakakis:aistats:2010} and a generalisation
of~\citep{willems:context}.

\subsection{Conditional density estimation}
\label{sec:cde}

In conditional density estimation, a simple way to generate the
sequence of covers is to use a kd-tree to create sequence of
partitions of $\CX$. However, other methods, such as a cover tree are
easily applicable. As in the variable Markov model case, the random
walk starts from the finest cover (which corresponds to the deepest
part of the tree) and is subsequently coarsened. One particularly
interesting use of the flexibility offered by transition probabilities
here is to define {\em multiple density estimators} at each context.

For the density estimators in each context, we specifically consider
two alternatives. Firstly, a Normal-Wishart conjugate
prior~\cite[Sec.~9.10]{Degroot:OptimalStatisticalDecisions}. This is a
classical Bayesian estimator, which can be updated in closed
form. Secondly, a Bayesian tree density estimator that
straightforwardly extends~\cite{hutter:bayestree} from densities on
the $[0,1]$ interval to densities on $[0,1]^n$ through a
kd-tree. These alternatives are selected via the random
walk. Consequently, inference is performed on a double pseudo-tree.

\section{Related work}
\label{sec:related-work}
Among other things, the presented model relies upon a marginalisation
over a finite number of contexts for tractable inference. Similar
mechanisms have of course appeared before. It is nevertheless
worthwhile to note two recent models proposed
in~\citep{wong2010optional,hutter:bayestree}, which are directly
applied to density estimation on $\CX$. There, the selection of a
context $\ctx$ can be seen as a walk starting from the root node of a
tree, which corresponds to the whole of $\CX$ and proceeding to a
matching child node, which is one of the subsets of the root note,
stopping with some probability. These models are not trivially
applicable to {\em conditional} density estimation, apart from the
(perhaps naive) approach of estimating $p(x,y), p(x)$ separately and
using their ratio. On the other hand, they can naturally be
incorporated within our framework by using them as optional sub-models
performing density estimation in each context.

In the context of variable order Markov model estimation, a related
construction was presented in~\citep{dimitrakakis:aistats:2010}.
There, the process can be seen as a walk starting from the leaf node
of a suffix tree, stopping with some probability, otherwise proceeding
to the parent node. The same structure is implicitly present in the
classic context treee weighting method~\citep{willems:context}. The
proposed framework can be seen as an extension of those two methods
where the context structure is not limited to a partition tree.

Most of the work on conditional density estimation has focused on
kernel based methods and tree methods. For example, recently an
approximate kernel conditional density
estimation~\citep{isbell:mc:2008} has been developed which employs a
double tree structure for efficient estimation of the kernel
bandwidth.  Finally, a set of tree models for conditional density
estimation are surveyed in \citep{davies-interpolating}. However, none
of these methods is fully Bayesian, in the sense that a distribution
on models is not maintained. Rather, a single tree model is selected
after all the data has been seen. In that sense, the approach
suggested in this paper has the additional advantage of being
incrementally updatable in closed form.

Finally, it is worth mentioning the related problem of estimating
conditional probabilities in a large (but finite) sets. For this
problem, \cite{beygelzimer:cpte:uai} propose and analyse an efficient,
incremental tree-based method. 

\iffalse
\subsection{Double kernel conditional density estimator}
The double kernel conditional density estimator also falls within this
framework. It can be written as:
\begin{align}
  \mbel(y \mid x) = \sum_\ctx \prior_\ctx(y) \bel(c \mid x),
\end{align}
where, for appropriate kernels $K_\bel, K_\prior$, we have:
\begin{align}
  \bel(\ctx \mid x) 
  &= \frac{K_\bel(x_\ctx, x)}{\sum_{\ctx'} K_\bel(x_{\ctx'}, x)},
  &
  \prior_\ctx(y)
  &=
  K_\prior(y_\ctx, y).
\end{align}
\fi

\section{Numerical experiments}
\label{sec:experiments}

We examined the algorithm on a number of conditional density
estimation domains.  As previously mentioned in Sec.~\ref{sec:cde}, we
used a double pseudo-tree structure, with optional Normal-Wishart
conjugate priors for modelling densities. The prior weights were set
to $2^{-k}$ for contexts at depth $k$ in order to favour short trees,
while all transition probabilities were initially uniform.  Since
inference is closed form, we can update all parameters according to
Theorem~\ref{the:inference}.  In order to generate the covers
efficiently, we construct a set of kd-tree structures online. That is,
once more than $\theta_k$ observation are within a leaf node at depth
$k$, the node is partitioned along its largest dimension.  It is easy
to see that the (pseudo) tree depth, and consequently the complexity
of the method depends on the choice of $\theta_k$.
\begin{lemma}
  For a total of $T$ observations and $\theta_k \defn \alpha^k$,
  $\alpha > 1$ , the tree depth is bounded by $\BigO{\log_\alpha
    T(\alpha - 1)}$ and $\BigW{\log_{\alpha\beta} T(\alpha\beta -
    1)}$, where $\beta$ is a branching factor.
\end{lemma}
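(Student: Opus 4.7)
The plan is to establish each bound by analyzing an extremal tree configuration and counting observations.

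For the upper bound, I would consider the worst-case arrangement of data that maximizes depth: all $T$ observations fall within a single root-to-leaf path. In this regime, the subtree rooted at the depth-$(d-1)$ ancestor of the deepest leaf contains all $T$ observations, and by the splitting rule this node must have exceeded $\theta_{d-1} = \alpha^{d-1}$ at some point, giving $T > \alpha^{d-1}$ and hence $d \leq 1 + \log_\alpha T$. To recover the stated $(\alpha - 1)$ factor literally, I would invoke the geometric-series identity along the critical path: the thresholds exceeded at depths $0, 1, \ldots, d-1$ sum to $\sum_{k=0}^{d-1} \alpha^k = (\alpha^d - 1)/(\alpha - 1)$, and inverting this in $T$ produces $\alpha^d - 1 \leq T(\alpha - 1)$, i.e. $d = O(\log_\alpha(T(\alpha - 1)))$.

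For the lower bound, I would count the maximum number of observations a tree of depth $d$ can accommodate. In a $\beta$-ary splitting rule (for a standard kd-tree, $\beta = 2$), there are at most $\beta^d$ nodes at depth $d$, and by the splitting rule each leaf at depth $k$ stores at most $\alpha^k$ observations. A short exchange argument (replacing a shallow leaf by its $\beta$ depth-$d$ descendants strictly increases capacity) shows the maximum is attained by a balanced full tree at depth $d$, giving $T \leq (\alpha\beta)^d$ and hence $d \geq \log_{\alpha\beta} T$. For the stated $(\alpha\beta - 1)$ factor I would use the refinement $T \leq \sum_{k=0}^{d} (\alpha\beta)^k = ((\alpha\beta)^{d+1} - 1)/(\alpha\beta - 1)$, which bounds total capacity when leaves may lie at any depth $\le d$; rearranging gives $T(\alpha\beta - 1) \leq (\alpha\beta)^{d+1} - 1$ and thus $d = \Omega(\log_{\alpha\beta}(T(\alpha\beta - 1)))$.

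The main obstacle is not the asymptotic bound itself, which in either direction falls out of a one-line counting argument, but rather matching the specific $(\alpha - 1)$ and $(\alpha\beta - 1)$ factors as stated. These factors are constants for fixed $\alpha, \beta$ and so free under $O$ and $\Omega$, but obtaining them inside the logarithm requires the geometric-series bookkeeping sketched above in place of the cruder ``one deepest node'' / ``only leaves at depth $d$'' bounds. A secondary point worth pinning down is the definition of the branching factor $\beta$, which is only introduced informally in the statement; it should be fixed as the maximum number of children produced by a single split (e.g.\ $\beta = 2$ for a binary kd-tree) before applying the counts above.
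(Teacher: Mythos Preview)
Your proposal is correct and follows essentially the same approach as the paper: the upper bound comes from the geometric series $\sum_{k} \alpha^k$ along a single maximal-depth path, and the lower bound from the geometric series $\sum_{k} (\alpha\beta)^k$ over a balanced tree. Your version is in fact more careful than the paper's (you justify via an exchange argument why the balanced tree realises the extremum, and you flag that $\beta$ needs to be pinned down), whereas the paper simply asserts the two extremal configurations and writes down the resulting sums.
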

\begin{proof}
  Let us first consider the upper bound. The depth is maximal when
  the deepest leaf node is reached for every observation. Consequently,
  \begin{align*}
    T &=
    \sum_{k=0}^{d} \alpha^k 
    = \frac{\alpha^{d+1} - 1}{\alpha - 1},
  \end{align*}
  and so $d = \log_\alpha [1 + (\alpha - 1) T] -1$.
  We can obtain a lower bound by examining the case where the tree is
  balanced.  Then the number of nodes at depth $k$ is then $N_k = \bra^k$
  and consequently:
  \begin{align*}
    T
    &= \sum_{k=0}^d N_k \theta_k
    = \sum_{k=0}^d (\alpha \beta)^k,
  \end{align*}
  and so $d = \log_{\alpha\bra} [1 + (\alpha\bra - 1) T] -1$.
\end{proof}
Using this lemma, it is easy to see that the total complexity is
$\BigO{T \log T}$, thus only slightly worse than linear.

\subsection{An illustration}
\begin{figure*}[ht]
  \centering
\iftrue
  \subfigure[$10^3$ observations]{
    \includegraphics[width=0.45\textwidth]{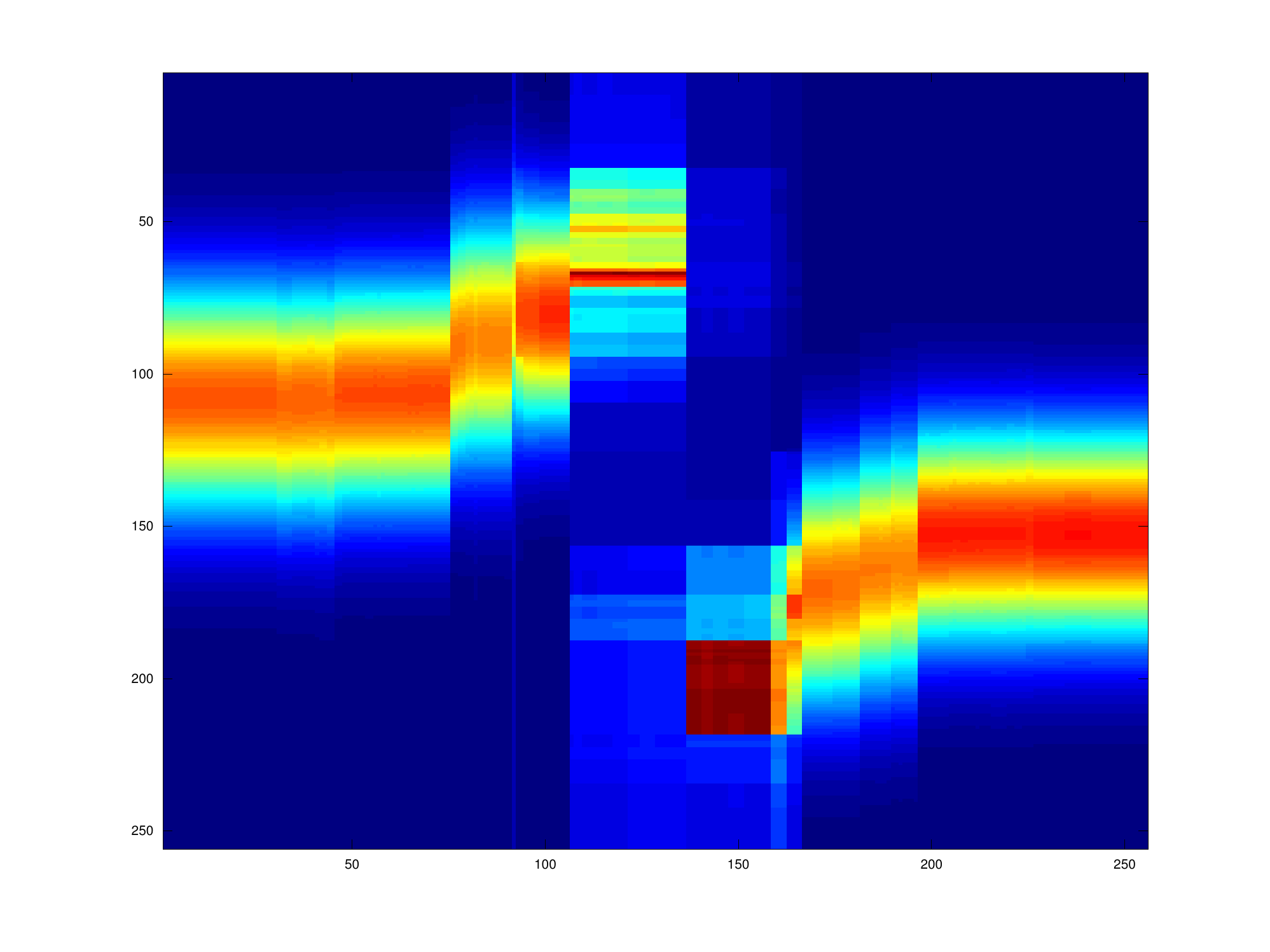}
  }
  \subfigure[$10^4$ observations]{
    \includegraphics[width=0.45\textwidth]{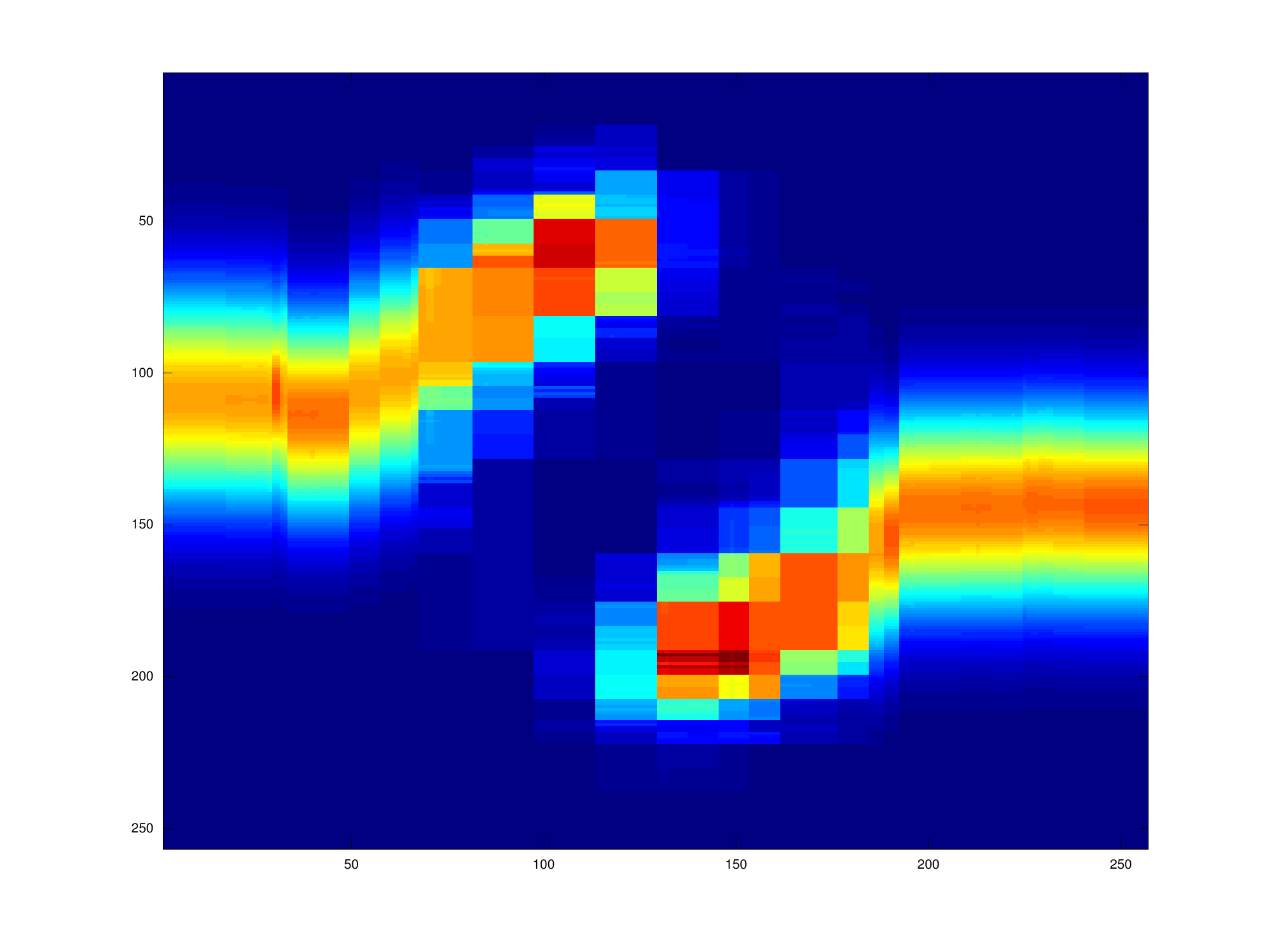}
  }
  \subfigure[$10^5$ observations]{
    \includegraphics[width=0.45\textwidth]{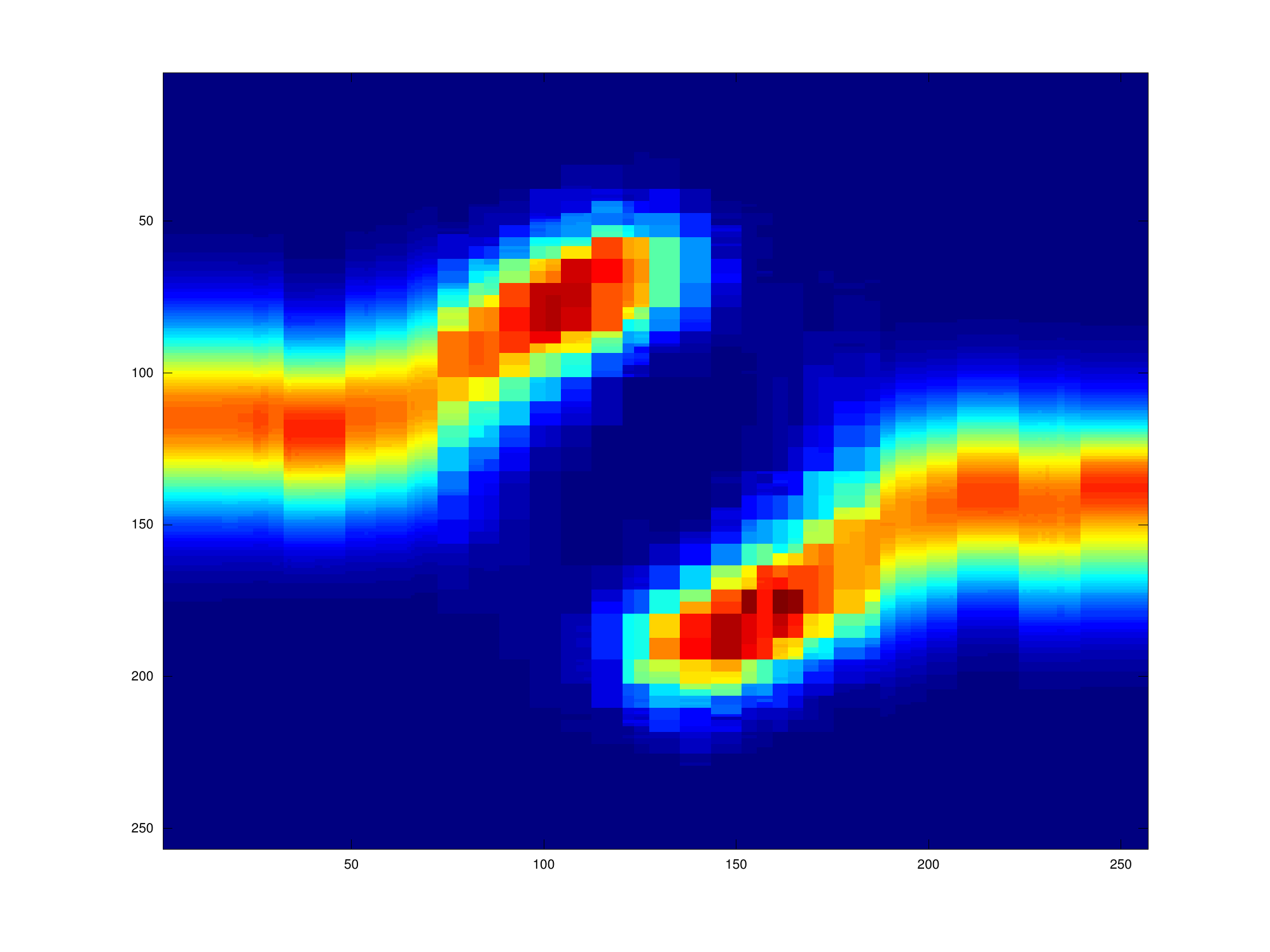}
  }
  \subfigure[$10^6$ observations]{
    \includegraphics[width=0.45\textwidth]{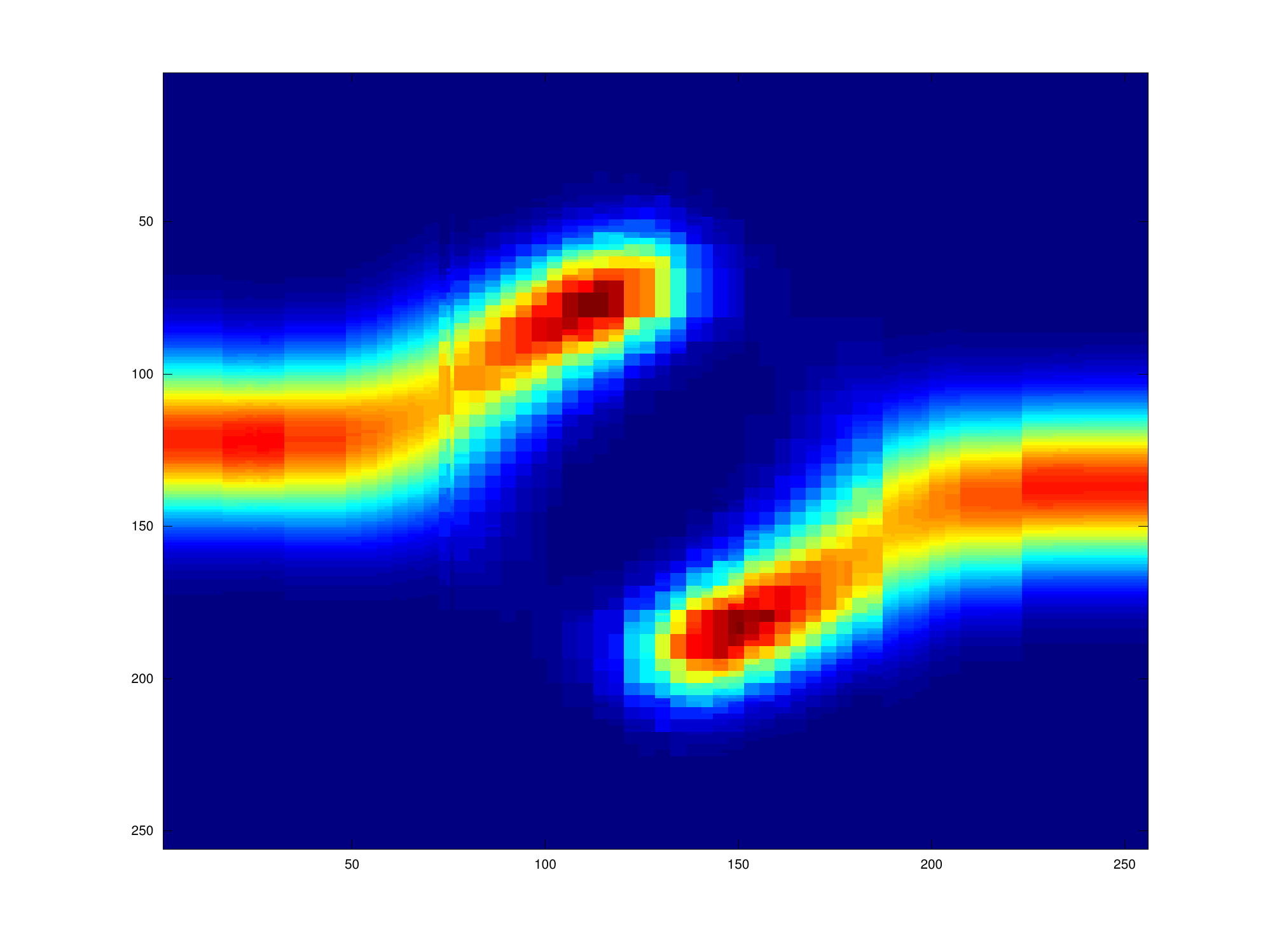}
  }
\fi
  \caption{Conditional density estimation illustration on a Gaussian
    ring distribution. It can be seen that the estimator settles on a
    Gaussian density near the edges, where the distribution is
    approximately normal, while uses a pseudo-tree distribution near
    the ring. The structure is refined with subsequent observations.}
  \label{fig:cde-illustration}
\end{figure*}

Figure~\ref{fig:cde-illustration} demonstrates the context model
estimator on a ring Gaussian distribution from which samples were
generated as follows. Firstly, the mean of a Gaussian was drawn by
sampling an angle $\theta$ from a mixture of univariate Gaussians. The
observation was then drawn from a bivariate Gaussian with mean equal
to the location on a unit ring determined by the drawn angle.
Consequently, near and within the ring, the distribution is highly
non-Gaussian, while further away from the ring the distribution
approaches normality. This is borne out in the figure, since, while in
far-away regions, the distribution is modelled with a smooth Gaussian,
close to the ring, even for a limited number of samples, the parts of
the model which correspond to non-Gaussian distributions have a higher
probability. 

\subsection{Comparisons}

We compared our method with a double-kernel conditional density
estimator utilising cross-validation for bandwidth selection.  This is
effectively a Parzen window estimator combined with a kernel density
estimator.  Although such methods are generally robust, they suffer
from two drawbacks. The first is the computational complexity
especially in terms of the bandwidth selection for the two
kernels. This is something addressed by~\cite{isbell:mc:2008}, which
uses a double tree structure to accelerate the search. The second and
most important drawback is that the bandwidth estimator is
invariant. This may potentially create problems, since ideally one
would like to vary the kernel in different parts of the space.  
For our quantitative experiments, we utilised a Gaussian kernel
throughout for the kernel estimators.

\begin{table}
  \centering
  \begin{tabular}{c|c|c|c|c}
    Name & $\CX$ & $\CY$ & training & holdout
    \\\hline
    Gaussian mixture & $\Reals$ & $\Reals$ & $10^6$ & $10^6$
    \\
    Uniform mixture & $\Reals$ & $\Reals$ & $10^6$ & $10^6$
    \\
    Geyser & $\Reals$ & $\Reals$ & $200$ & $72$
    \\
    Robot & $\Reals^{16}$ & $\Reals^8$ & $2812$ & $2644$
  \end{tabular}
  \caption{Summary of datasets}
  \label{tab:datasets}
\end{table}
\begin{figure*}
  \centering
  \subfigure[Gaussian mixture]{
    \psfrag{log loss}[B][B][0.7][0]{$\loss$}
    \psfrag{observations}[B][B][0.7][0]{$t$}
    \includegraphics[width=0.47\textwidth]{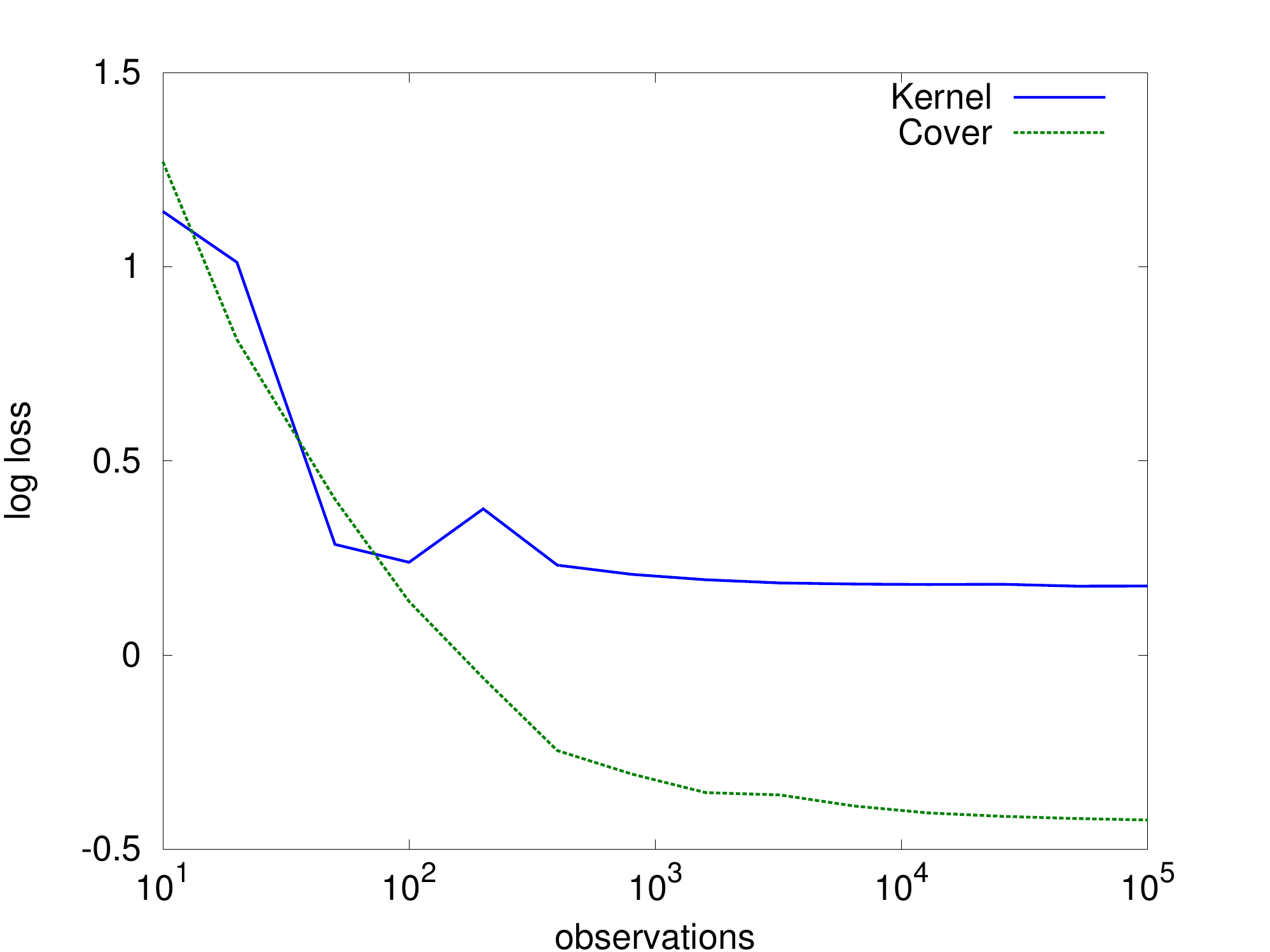}
  }
  \subfigure[Uniform mixture]{
    \psfrag{log loss}[B][B][0.7][0]{$\loss$}
    \psfrag{observations}[B][B][0.7][0]{$t$}
    \includegraphics[width=0.47\textwidth]{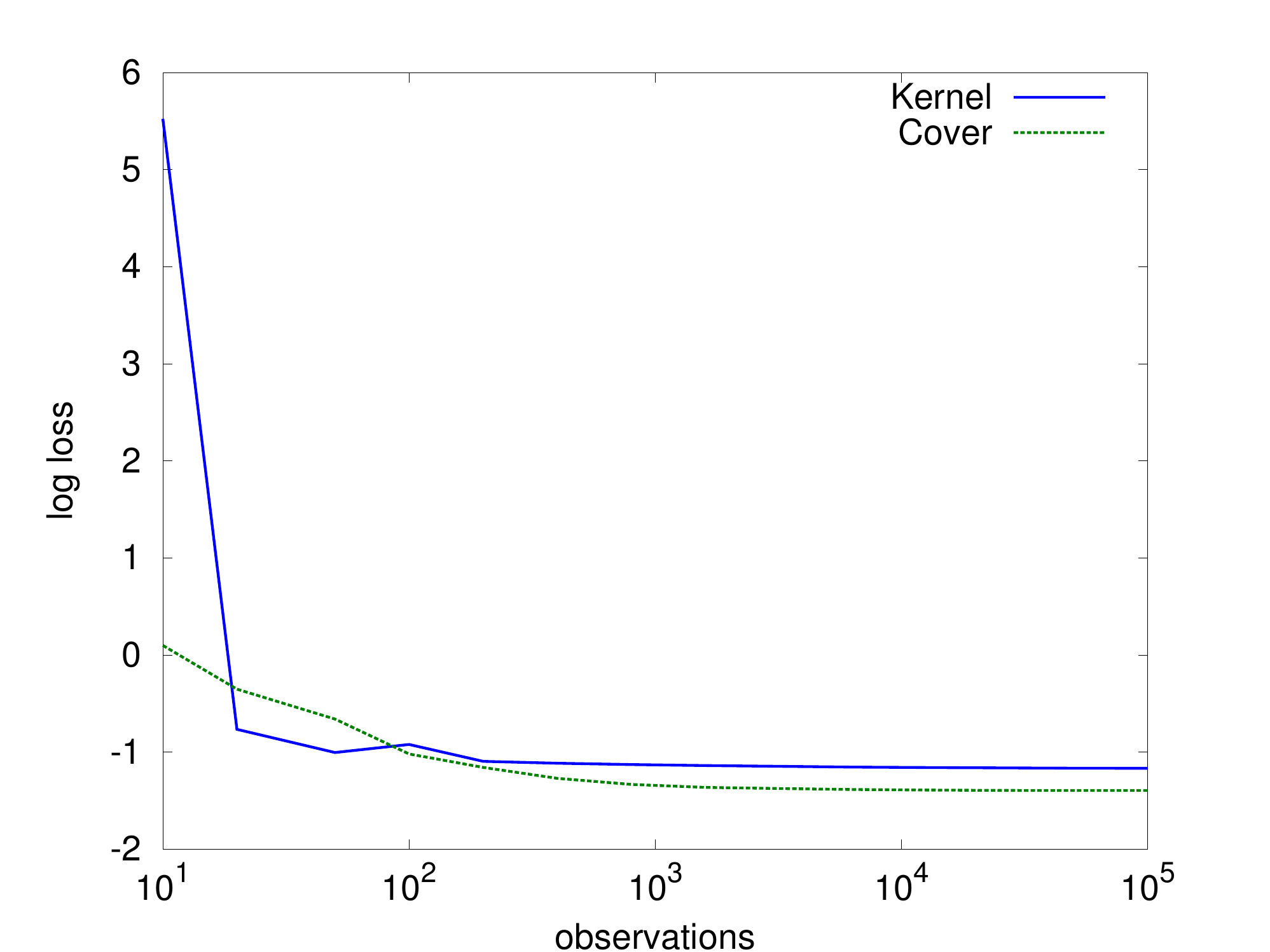}
  }
%  \subfigure[Uniform gauss mixture]{
%    \psfrag{log loss}[B][B][0.7][0]{$\loss$}
%    \psfrag{observations}[B][B][0.7][0]{$t$}
%    \includegraphics[width=0.47\textwidth]{}
%  }
  \subfigure[Geyser]{
    \psfrag{log loss}[B][B][0.7][0]{$\loss$}
    \psfrag{observations}[B][B][0.7][0]{$t$}
    \includegraphics[width=0.47\textwidth]{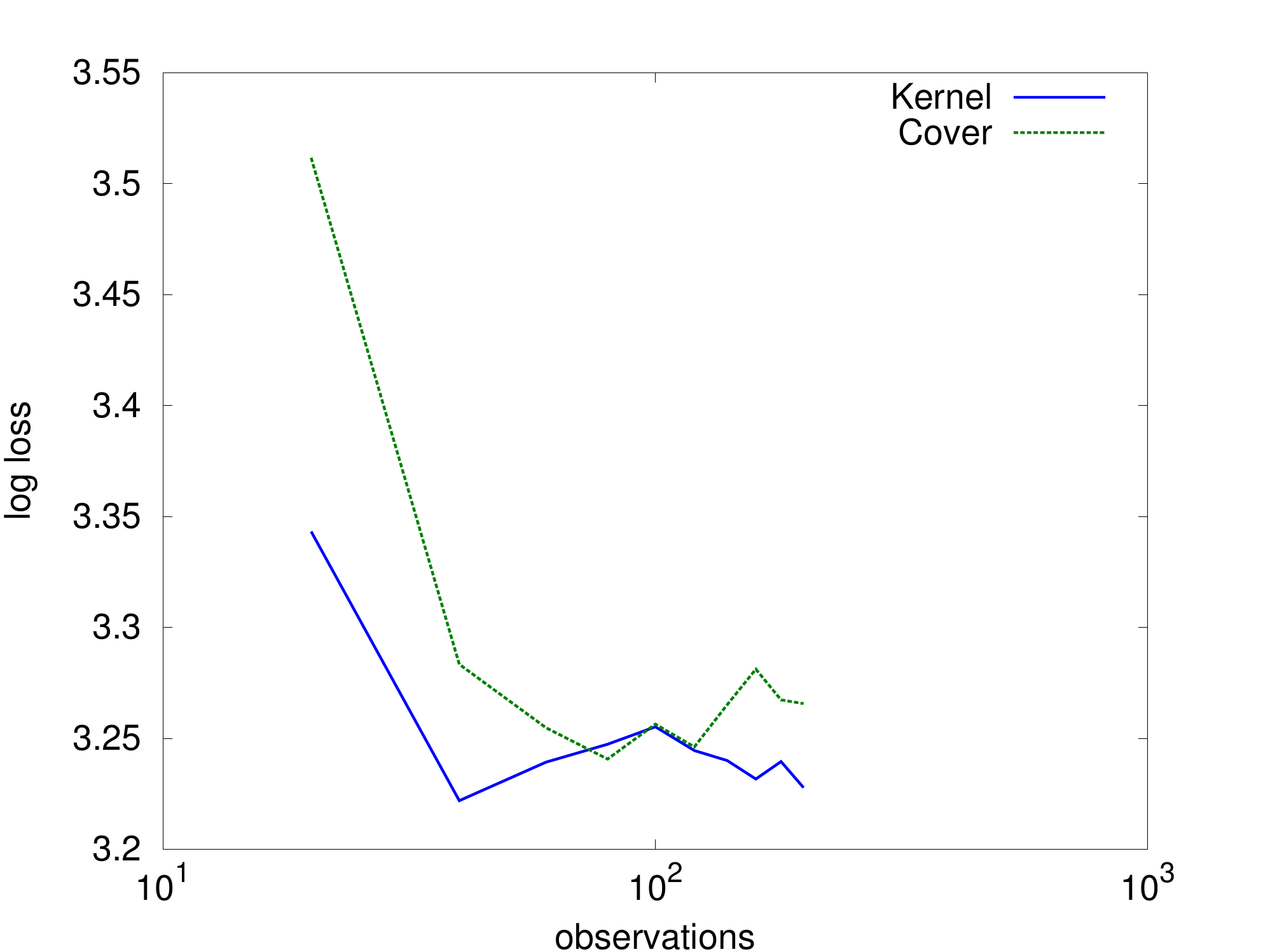}
  }
  \subfigure[Robot]{
    \psfrag{log loss}[B][B][0.7][0]{$\loss$}
    \psfrag{observations}[B][B][0.7][0]{$t$}
    \includegraphics[width=0.47\textwidth]{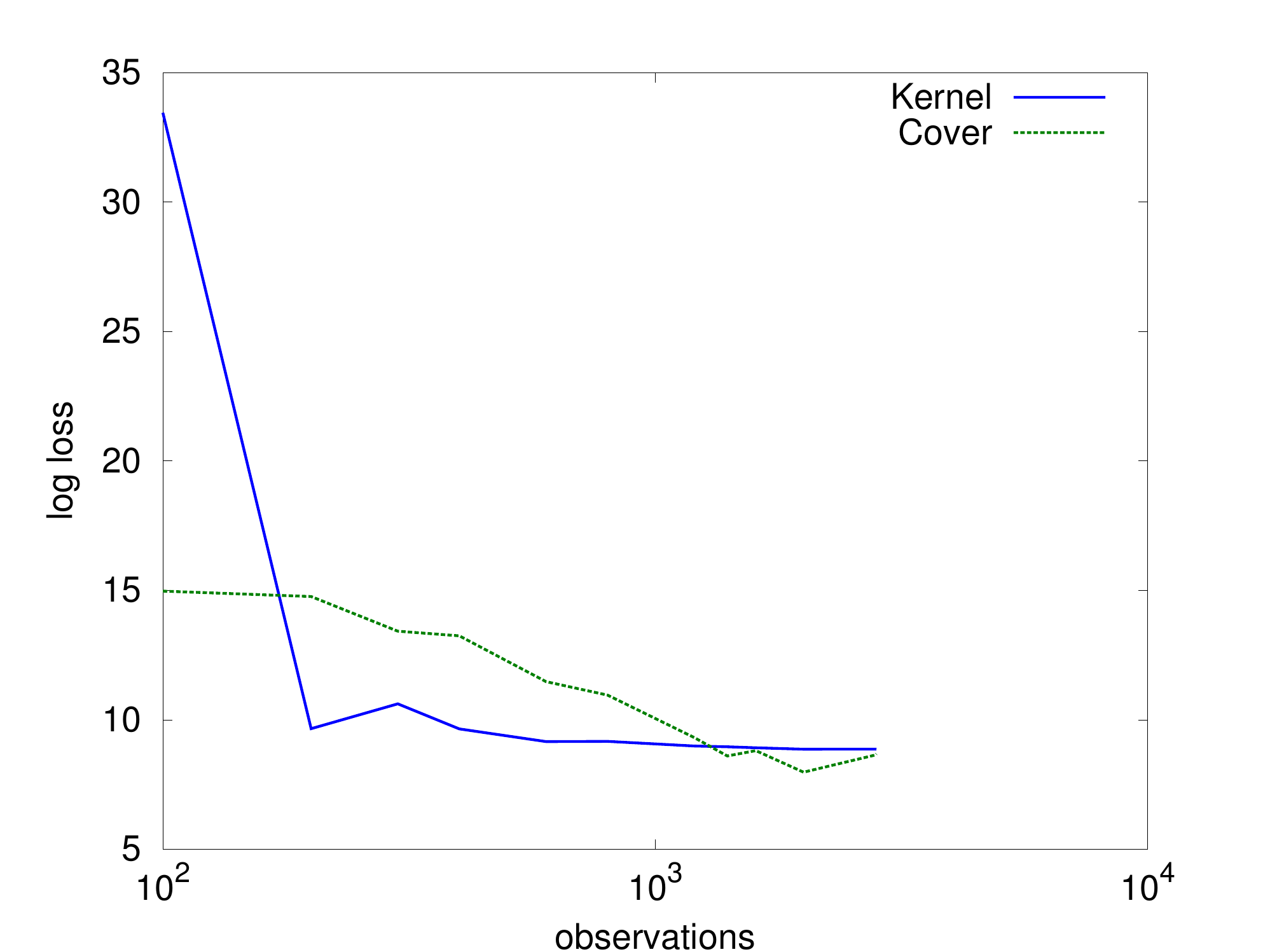}
  }
  \caption{Conditional density estimation performance on a hold-out
    set, for four different datasets as the number of observations $t$
    increases. The performance is in terms of the relative log loss
    $L_t$ or average negative log-likelihood of the hold-out set. In
    most cases, the context cover double pseudo-tree significantly
    outperforms a bandwidth-tuned kernel estimator.}
  \label{fig:cde-performance}
\end{figure*}

The experimens were performed on a number of datasets, summarised in
Table~\ref{tab:datasets}. The first two are large, synthetic
datasets. The {\em Gaussian mixture} dataset is a mixture of three
Gaussian distributions on $\Reals^2$, where the first dimension is
used as the conditioning variable.  Similarly the {\em Uniform
  mixture} dataset is a mixture of three uniform distributions.  We
also have results from two real datasets. The first, {\em Geyser}, is
the well-known dataset of eruption times and durations for the ``old
faithful'' geyser.  The second dataset, {\em Robot} is a set of
proximity sensor readings from a robot performing a navigation task.

For each dataset, we measured the average negative log-likelihood of
each method as the amount of training data increased.  Each dataset
$D$ was split into a training set $D_T$ and hold-out set $D_H$. For
each method, we obtained a sequence of conditional density models
$p_t$, trained on the subset $D_t \subset D_T$ of the first $t$
observations in the training set and then calculated the average
negative log-likelihood of that model on a hold-out set $D_h$:
\begin{equation}
  L_t \defn - \frac{1}{|D_h|} \sum_{(x,y) \in D_h} \ln p_t(y \mid x).
\end{equation}

For the cover method, we employed the same settings as in the previous
experiment. For the double-kernel method, for each training subset
$D_t$, we employed 10-fold cross-validation to select the bandwidths
of the two kernels and then used the chosen bandwidths to obtain a
model on the full subset $D_t$. The criterion for choosing the
bandwidth was the likelihood on the left-out folds.

Figure~\ref{fig:cde-performance} compares the performance of our model
with a double-kernel conditional density. One would expect the kernel
method to perform best in the {\em Gaussian mixture} dataset, while
the cover method would be favoured in the uniform mixture.  This
however, is clearly not the case. Firstly, note that the cover method
can optionally use a Normal-Wishart distribution to model the density
at any part of the space. Thus, the pure Gaussian kernel has no
initial advantage.  Secondly, some parts of $\CX$ have much fewer
samples and so would require a much wider kernel for accurate
estimation. However, the use of an invariant kernel means that this is
not possible. In the {\em uniform mixture} dataset, the kernel method
is almost as well as the cover method, though it is initially
disadvantage due to the bad fit of the Gaussian kernel to the uniform
blocks. In the widely-used, although extremely small, {\em Geyser}
dataset, it can be seen that the kernel method dominates the cover
one.  However, the difference is quite small and the size of the
dataset is such that the performance of the method is mainly dependent
upon how well its prior assumptions match the dataset. Finally, in the
{\em Robot} dataset, which is high-dimensional but has only a moderate
number of observations, the methods are more or less evenly
matched. The initially bad performance of the kernel method is mainly
due to the fact that it is hard to choose a good bandwith from only
100 samples in a high-dimensional space.  

Overall, one may observe that the two methods usually perform mostly
similarly.  However, the cover method appears to be more robust and in
some cases its asymptotic performance is significantly better than
that of the kernel method.

\section{Conclusion}
\label{sec:conclusion}

We outlined an efficient, online, closed-form inference procedure for
estimation on a sequence of covers.  It can be seen as a direct
extension of a previous
construction~\citep{dimitrakakis:aistats:2010}, which was limited to
partition trees and an analogous procedure for density estimation on
partition trees, given by~\citet{hutter:bayestree}. 

In principle, the approach is applicable to any problem involving
estimation of conditional measures, such as classification and
variable order Markov model estimation. As an example, we applied it
to conditional density estimation, a fundamental problem in
statistics. The result is the first, to our knowledge, closed-form,
incremental, polynomial-time, Bayesian conditional density estimation
method.

In order to do this, we utilised a double pseudo-tree structure.  The
first part of the structure was used to estimate the conditional
probabilities of context models. The second part of the structure was
used to estimate a density for each context.  This resulted in a
procedure for closed-form, Bayesian, non parametric conditional
density estimation. As expected, the performance of this method was in
some cases significantly better than that of a kernel based estimator
with an invariant kernel.

In future work, we would like to consider other density estimators for
the local context models. Since there are virtually no restrictions
regarding their type (other than the ability for incremental
conditioning), using kernel density estimators on each context
instead, could be a route towards obtaining non-invariant kernel
density estimation methods.  In addition, it would be interesting to
consider problems where we have some prior information regarding the
smoothness of the underlying conditioning density, perhaps in terms of
Lipschitz conditions with respect to the conditioning variable.

The main open problem is how to generate the covers.  In this paper,
we utilised a kd-tree to do so. However, the generality of the
approach is such that many other more interesting alternatives are
possible.  For example, cover trees~\citep{cover-tree:icml2006}, which
are an extremely efficient nearest-neighbour method, are an ideal
alternative. This alternate structure, would allow the application of
cover models to an arbitrary metric space. In addition, inference on
any lattice structure should remain tractable.

Nevertheless, the problem of finding a suitable sequence of covers
remains. This is more pronounced for controlled processes, because one
cannot rely on the statistics of the observations to create a useful
cover. This problem can be circumvented if a distribution on covers is
maintained, which would be more in the spirit of the optional
P{\'o}lya tree~\citep{wong2010optional}. However, then inference would
no longer be closed form.

\section*{Acknowledgments}
Many thanks to Peter Auer for pointing out that the original variable
order Markov model construction is generalizable, and to Peter
Gr\"{u}nwald, Marcus Hutter and Ronald Ortner for extremely useful
discussions. Finally, thanks go to the anonymous reviewers who
provided thoughtful comments for previous versions of this paper.

\end{document}